
\documentclass[10pt,doublecolumn]{IEEEtran}

%

\usepackage{amsmath}
\usepackage{amssymb}
\usepackage{amsthm}
\usepackage{epsfig}

\usepackage{graphics}

\renewcommand{\vec}[1]{\ensuremath{{\rm\bf{#1}}}}
\newcommand{\mat}[1]{\ensuremath{{\rm\bf{#1}}}}

\newtheorem{theorem}{Theorem}

\begin{document}
%

\title{Bayesian Kernel and Mutual $k$-Nearest Neighbor Regression}

%
%
%

\author{Hyun-Chul Kim
\thanks{H.-C. Kim is with R$^2$ Research,
       Seoul, South Korea.    \newline
        E-mail: hckim.sr{\fontfamily{ptm}\selectfont @}gmail.com}  } 

\maketitle

\begin{abstract}
We propose Bayesian extensions of two nonparametric regression methods which are kernel and mutual $k$-nearest neighbor regression methods.  Derived based on Gaussian process models for regression, the extensions provide distributions for target value estimates and the framework to select the hyperparameters. It is shown that both the proposed methods asymptotically converge to  kernel and mutual $k$-nearest neighbor regression method, respectively. The simulation results show that the proposed methods can select proper hyperparameters and are better than or comparable to the former methods for an artificial data set and a real world data set.
\end{abstract}

\begin{IEEEkeywords}
 kernel regression, Bayesian kernel regression, bandwidth selection, mutual $k$-NN regression, Bayesian mutual $k$-NN regression, Gaussian processes, Bayesian model selection\end{IEEEkeywords}

%

\section{Introduction}
  In regression analysis, it is analyzed how the response variable $y \in \mathbf{R}$ depends on the value of the observation vector $\vec x \in \mathbf{R}^d$ \cite{opac-b1123996}. The classical approach for regression is a parametric regression approach, where a certain type of structure of the regression function is assumed to be known and can be described by finitely many parameters. A big drawback of the parametric regression is that it cannot approximate the function better than the best one in the assumed parametric structure. This drawback can be avoided by the nonparametric regression approach, where does not assume that the regression function can be determined by finitely many parameters.

One of well-known nonparametric regression methods is kernel regression \cite{opac-b1123996}.
Kernel regression estimator has kernel function with an essential parameter called bandwidth.  The performance of kernel regression is known to be largely dependent on the selection of the parameter called bandwidth \cite{herrmann2000variance}.  Methods based on cross-validation for bandwidth selection in kernel regression has been proposed \cite{hardle1985optimal,hardle1997multivariate}. It was shown to be asymptotically optimal.  A Bayesian approach of averaging over bandwidth rather than selecting bandwidth has been also proposed \cite{zhang2009bayesian}. They set a leave-one-out likelihood and a prior for bandwidth, and derived the posterior estimate of bandwidth. Rather than selecting a single bandwidth, they averaged the estimator over bandwidth. Metropolis-Hastings algorithm was used for sampling bandwidth to approximate the posterior. 

Kernel regression has been widely applied to various kinds of areas including the empirical finance.  It was applied to estimate state-densities implicit in financial asset prices \cite{ait1998nonparametric,ait2000nonparametric}. It was
also applied to estimate call prices and exercise decisions \cite{broadie2000american}, option prices  \cite{aït2001goodness}, and the probability of a crash \cite{fernandes2006financial}.
Besides the empirical finance area, kernel regression and its data-adapted extensions have been  applied to image processing, restoration, and enhancement \cite{Takeda07kernelregression}. Bilateral fitlering widely used in image processing is a data-adapted extension of kernel regression which uses image pixel values as inputs of  kernel function \cite{Takeda07kernelregression,tomasi1998bilateral,elad2002origin}.  Nonlocal-means filtering for super-resolution reconstruction is a data-adapted extension of kernel regression which uses pixel locations and extracted image patches as inputs of kernel function \cite{protter2009generalizing}. Another data-dependent extension of kernel regression is steering kernel regression which uses pixel locations and local gradients in the image as inputs of kernel function \cite{Takeda07kernelregression}.

 Another of well-known nonparametric regression methods is $k$-nearest neighbor ($k$-NN) regression \cite{opac-b1123996}.  Mutual $k$-NN (M$k$NN) regression is a variate of $k$-NN regression based on mutual neighborship rather than one-sided neighborship \cite{JMLR:v14:guyader13a}.  Even though it was not applied to regression task, M$k$NN concept existed even in 1970s. \cite{gowda1978agglomerative,gowda1979condensed} has used M$k$NN methods for clustering. More recently, M$k$NN methods have been applied to classification \cite{liu2010new}, outlier detection \cite{hautamaki2004outlier}, object retrieval \cite{jegou2010accurate}, and clustering of interval-valued symbolic patterns \cite{guru2006clustering}. \cite{ozaki2011using} used M$k$NN concept to semi-supervised classification of natural language data and showed that the case of
using M$k$NN concept consistently outperform the case of using $k$-NN concept. For the regression case, \cite{JMLR:v14:guyader13a} has argued that M$k$NN regression might be less influenced by hubs which are data points appearing in the nearest neighbor list of many data points. 

In this paper we propose a novel method for Bayesian kernel regression and a Bayesian method for mutual $k$-NN regression{\footnote{To our knowledge, no Bayesian approach for either $k$-NN regression or mutual $k$-NN regression has been proposed.}}. It provides not only the distribution of target value, but also
Bayesian model selection framework for bandwidth in kernel regression and $k$ in mutual $k$-NN regression. We take Laplacian-based covariance matrix and use Gaussian process model. We show that the mean of target values in the proposed methods is asymptotically the estimates of kernel regression and mutual $k$-NN regression, respectively

The paper is organized as follows.  In section \ref{sec:kernel_MkNN_reg} we describe kernel regression and mutual $k$-NN regression. In section \ref{sec:Bayes_kernel_MkNN}  we briefly explain Gaussian process regression with Laplacian-based covariance matrix, and based on that model propose Bayesian kernel regression and Bayesian mutual $k$-NN regression. In the section we also propose the Bayesian model selection methods for both cases. In section \ref{sec:sim_res} we show simulation results for an artificial data set and a real-word data set. In section \ref{sec:related_works} we discuss the works related to the proposed models. Finally a conclusion is drawn.

\section{Kernel and Mutual $k$-Nearest Neighbor Regression}
\label{sec:kernel_MkNN_reg}
\subsection{Kerenel Regression}

Let us assume that we have the set of pairs of the observation vector and response variable $\mathcal{D}_n=\{ (\vec x_1, y_1), \ldots, (\vec x_n, y_n) \}$, where $\vec x_i \in \mathbf{R}^d$ and $y_i \in \mathbf{R}$. Given a new observation vector $\vec x$, the kernel estimate is defined as follows. 
\begin{align}
m^{\mathrm{ker}}_n(\vec x) = 
\left\{ \begin{array}{ll}
         \frac{\sum_{i=1}^n y_i k(\frac{\vec x - \vec x_i}{h_n})}
				{\sum_{i=1}^n k(\frac{\vec x - \vec x_i}{h_n})}
 & \mbox{if $\sum_{i=1}^n k(\frac{\vec x - \vec x_i}{h_n}) \neq 0$};\\
        0 & \mbox{otherwise,} \end{array} \right.
\label{eqn:ker_reg}
\end{align}
where $h_n$ is a bandwidth only depending on the sample size $n$ and $k(\cdot)$ is a kernel function which maps $\mathcal{R}^d$ to $ \mathcal{R}^{+}\cup \{ 0 \}$ \cite{opac-b1123996}.  In this paper we use the following kernel function.
\begin{align}
	k(\vec z) = \exp ( - || \vec z  ||^2 ).
\end{align}

In the kernel function $k(\frac{\vec x_i - \vec x_j}{h_n})$, a single identical bandwidth for all the dimensions is used.  In addition to the kernel function with a single bandwidth we use another kernel function, which uses an individual bandwidth for each dimension, $k((\vec x_i - \vec x_j) {\mat H}^{-1})$ where $\mat H$ is a $d\times d$ diagonal matrix with the diagonal elements $h_{n,1}, h_{n,2}, \ldots, h_{n,d}$. The former one is called the single bandwidth scheme, and the latter one is called the multiple bandwidth scheme. For the convenience, we also denote $k(\frac{\vec x_i - \vec x_j}{h_n})$, or $k((\vec x_i - \vec x_j) {\mat H}^{-1})$ by $k(\vec x_i, \vec x_j)$, and we mention whether it is under the single bandwidth scheme or the multiple bandwidth scheme.

\subsection{Mutual $k$-Nearest Neighbor Regression}

First, we describe $k$-nearest neighbor regression method \cite{opac-b1123996}. We denote a reordering of the elements of $\mathcal{D}_n$ by $(\vec x_{(1,n)}(\vec x),y_{(1,n)}(\vec x)),\ldots,(\vec x_{(n,n)}(\vec x),y_{(n,n)}(\vec x) )$ according to increasing values of $||\vec x_i - \vec x||$, Euclidean distance between $\vec x_i$ and $\vec x$. Then, given $\vec x \in \mathbf{R}^d$, the $k$-Nearest Neighbor ($k$-NN) estimate is defined by
\begin{align}
   m^{k\mbox{NNR}}_n(\vec x) = \frac{1}{k}\sum_{i=1}^k y_{(i,n)}(\vec x).
\end{align}

 Mutual $k$-nearest neighbor regression method  \cite{JMLR:v14:guyader13a} is a method to consider $k$ mutually nearest neighbor data points, rather than just $k$ nearest neighbor data points.  Let $\mathcal{N}_k(\vec x)$ be the set of the $k$ nearest neighbors of $\vec x$ in $\mathcal{D}_n$, $\mathcal{N}'_k(\vec x_i)$ the set of $k$ nearest neighbors of $\vec x_i$ in $(\mathcal{D}_n \backslash \{\vec x_i \}) \cup \{ \vec x \}.$ The set of Mutual $k$-Nearest Neighbors (M$k$NNs) of $\vec x$ is defined as
\begin{align}
	\mathcal{M}_k(\vec x) = \{ \vec x_i \in \mathcal{N}_k(\vec x) : \vec x \in \mathcal{N}'_k (\vec x_i) \}.
\end{align}
Then, the mutual $k$-nearest neighbor regression estimate is defined as
\begin{align}
    m^{\mbox{M$k$NNR}}_n(\vec x) = \left\{ \begin{array}{ll}
         \frac{1}{M_k(\vec x)}\sum_{i:\vec x_i \in \mathcal{M}_k(\vec x)}^k y_i & \mbox{if $M_k(\vec x) \neq 0$};\\
        0 & \mbox{if $M_k(\vec x) = 0$}.\end{array} \right.
\label{eqn:mut_nn_reg_est}
\end{align}
where $M_k(\vec x) = | \mathcal{M}_k(\vec x)| $. 

\section{Bayesian Kernel and Mutual $k$-NN Regression via Gaussian Processes}
\label{sec:Bayes_kernel_MkNN}
\subsection{Gaussian Process Regression}

Assume that we have a data set $D$ of data points $\vec x_i$ with
continuous target values $y_i$: $D = \{(\vec x_i, y_i)|i = 1, 2,
\ldots, n\}$, $X = \{\vec x_i|i = 1, 2, \ldots, n \}$, $\vec y=[y_1,
y_2, \ldots, y_n]^T$. We assume that the observations of target values are nosiy,
and set $y_i=f(\vec x_i)+\epsilon_i$, where $f(\cdot)$ is a target function to be estimated
and $\epsilon_i \sim \mathcal{N}(0,v_1)$. A function $f(\cdot)$ to be estimated given
$D$ is assumed to have Gaussian process prior, which means that any
collection of functional values are assumed to be multivariate
Gaussian \cite{Williams95gaussian,Rasmussen:2005:GPM:1162254}.

The prior for the function values $\vec f$ $(=[f(\vec x_1), f(\vec x_2),  $ $\ldots, f(\vec x_n) ]^T)$ is assumed to be Gaussian:
\begin{align}
p(\vec f|X, \Theta_f) = \mathcal{N}(\vec 0, \mat C_f).
\end{align}
Then the density function
for the target values can be described as follows.
\begin{align}
	p(\vec y| X, \Theta) &= \mathcal{N}(\vec 0, \mat C_f + v_1 \mat I) \\
					&= \mathcal{N}(\vec 0, \mat C),
\end{align} 
where $\mat C$ is a matrix whose elements $C_{ij}$ is a covariance
function value $c(\vec x_i,\vec x_j)$ of $\vec x_i$, $\vec x_j$ and
$\Theta$ is the set of hyperparameters in the covariance function. 

One of the widely used covariance functions is as follows:
\begin{align}
c(\vec x_i, \vec x_j) =& v_0 \exp \{ -\frac{1}{2}\sum_{m=1}^d l_m
(x_i^m-x_j^m)^2 \} + v_1\delta(i,j), \label{eqn:cov_ftn}
\end{align}
where $x_i^m$ is the $m$th element of $\vec x_i$. The hyperparameter
$v_0$ specifies the overall vertical scale of variation of the
target values, $v_1$ the noise variance of the target values,
and $l_m$ the (inverse) lengthscale for feature dimension $m$. The
covariance function described in Eq (\ref{eqn:cov_ftn}) enables GPR to
estimate a nonlinear function. $v_1$ makes GPR robust to noise, and
the optimized $l_1, \ldots, l_d$ can be used to determine relevant
features.

It can be shown that GPR provides the following
distribution of target value $f_{\mathrm{new}}(=f(\vec x_{\mathrm{new}}))$ given a
test data $\vec x_{\mathrm{new}}$:
\begin{align}
 p(f_{\mathrm{new}}|\vec x_{\mathrm{new}}, D,\Theta)=\mathcal{N}(\vec k^T \mat
 C^{-1} \vec f, \kappa - \vec k^T \mat C^{-1} \vec k),
 \label{eqn:result}
\end{align}
where $\vec k = [c(\vec x_{\mathrm{new}},\vec x_1) \ldots c(\vec
x_{\mathrm{new}},\vec x_n)]^T$, $\kappa=c(\vec x_{\mathrm{new}},\vec x_{\mathrm{new}})$. The
variance of the target value $f_{\mathrm{new}}$ is related to the degree of
its uncertainty.

Since the set of hyperparameters $\Theta(=\{v_0,v_1,l_1,\ldots,l_d\})$
controls the model complexity, it is very important to select the
most proper $\Theta$. The proper $\Theta$ can be obtained by
maximizing the marginal likelihood $p(\vec y|X,\Theta)$
\cite{Williams95gaussian,Gibbs97efficientimplementation,Rasmussen:2005:GPM:1162254}.
\begin{align}
    \log p(\vec y | \mat X, \Theta) = -\frac{1}{2}\vec y^\top \mat C^{-1} \vec y - \frac{1}{2} \log | \mat C | - \frac{N  T}{2}  \log 2 \pi.
\label{eqn:loglikeli}
\end{align}
Rather than choosing a single set of hyperparameters by optimization, we can average
over the hyperparameters with MCMC methods \cite{Williams95gaussian,Neal1997RC_GP}.

\subsection{Laplacian-based Covariance Matrix}

The combinatorial Laplacian $\mat L$ is defined as follows.
\begin{align}
\mat L = \mat D - \mat W,\label{eq:Delta}
\end{align}where $\mat W$ is an $N\times N$ edge-weight matrix with the edge
weight between two points $\vec x_i$,$\vec x_j$ given as $w_{ij}(=w(\vec x_i,\vec x_j))$ and $\mat D =
\mathrm{diag}(d_1,...,d_N)$ is a diagonal matrix with diagonal
entries $d_i=\sum_{j} w_{ij}$.

Similarly to \cite{zhu2003semi}, to avoid the singularity we use
 Laplacian-based covariance matrix as
\begin{align}
   \mat C =&  (\mat L + \sigma^2 \mat I)^{-1} = \tilde{\mat C}^{-1}.
\end{align}
Then, we have Gaussian process prior as follows.
\begin{align}
	p(\vec y| X, \Theta) &= \mathcal{N} (\vec 0, \mat C),
\end{align}
The predictive distiribution for $y_{\mathrm{new}}$ is as follows.
\begin{align}
	p(y_{\mathrm{new}}| \vec y, X, \vec x_{\mathrm{new}}, \Theta) &= \frac{p(\vec y_{\mathrm{new}}| X, \vec x_{\mathrm{new}}, \Theta)}{p(\vec y| X, \vec x_{\mathrm{new}}, \Theta)}
	\nonumber \\
	&= \mathcal{N}_{\vec y_{\mathrm{new}}} (\vec 0, \mat C_{\mathrm{new}}) / \mathcal{N}_{\vec y} (\vec 0, \mat C) \nonumber \\
				&= \mathcal{N}_{\vec y_{\mathrm{new}}} (\vec 0, \tilde{\mat C}_{\mathrm{new}}^{-1}) /  \mathcal{N}_{\vec y} (\vec 0, \tilde{ \mat C}^{-1}) \nonumber \\
				&\propto \frac{\exp(- \frac{1}{2} [\vec y^T y_{\mathrm{new}}] 
				\tilde{\mat C}_{\mathrm{new}} [\vec y^T y_{\mathrm{new}}]^T )}
				{\exp(- \frac{1}{2} \vec y^T
				\tilde{\mat C} \vec y )} \nonumber \\
				&\propto \exp(- \frac{1}{2}\tilde{\kappa} y_{\mathrm{new}}^2 - \tilde{\vec k}^T \vec y y_{\mathrm{new}}) \nonumber \\
				&\propto  \mathcal{N}(-\frac{1}{\tilde{\kappa}} \tilde{\vec k}^T \vec
 y,\frac{1}{\tilde{\kappa}}),
\end{align}
where
\begin{align}
\tilde{\mat C}_{\mathrm{new}}&=\left[\begin{array}{ll}
\mathbf{\tilde{C}} & \tilde{\vec k} \\
   \tilde{\vec k}^T & \tilde{\kappa}
   \end{array} \right]
=\mathbf{C}_{\mathrm{new}}^{-1},  \\
\label{aaa}
   \tilde{\kappa} &= \sum_{i=1}^N w(\vec x_{\mathrm{new}},\vec x_i) + \sigma^2, \\
   \tilde{ \vec k}^T &= - [ w(\vec x_{\mathrm{new}}, \vec x_1), w(\vec x_{\mathrm{new}}, \vec x_2), \ldots, w(\vec x_{\mathrm{new}}, \vec x_N) ].
\end{align}
The mean and variance of $y_{\mathrm{new}}$ is represented as
\begin{align}
   \mu_{ y_{\mathrm{new}}}&= -\frac{1}{\tilde{\kappa}} \tilde{\vec k}^T \vec y_L 
                            =\frac{\sum_{i=1}^N w (\vec x_{\mathrm{new}}, \vec x_i) y_i }{\sum_{i=1}^N w (\vec x_\mathrm{new}, \vec x_i)+\sigma^2}, \label{eq:transd_lap_mean} \\
   \sigma^2_{ y_{\mathrm{new}}}&= \frac{1}{\tilde{\kappa}}
                            =\frac{1}{\sum_{i=1}^N w(\vec x_{\mathrm{new}},\vec x_i) + \sigma^2}. \label{eq:transd_lap_std}
\end{align}
Eq (\ref{eq:transd_lap_mean}) links Gaussian process regression with Laplacian-based covariance matrix to kernel and mutual $k$-NN regression, which will be described below. 

\subsection{Bayesian Kernel Regression}

If we set $w_{ij}=w_{\mathrm{ker}} (\vec x_i, \vec x_j)=\sigma_0 k(\vec x_i, \vec x_j)$, where $k(\cdot)$ is a kernel function in Eq (\ref{eqn:ker_reg}) and $\sigma_0>0$,
we get the following theorem for the validity of the covariance matrix for Gaussian processes.  
\begin{theorem}
   Covairance matrix $\tilde{\mat C}$ with $w_{ij}=w_{\mathrm{ker}} (\vec x_i, \vec x_j)$ 
is valid for Gaussian processes if $\sigma^2>0$.
\label{thm:ker_cov_valid}
\end{theorem}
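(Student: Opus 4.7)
The plan is to verify that $\tilde{\mat C} = \mat L + \sigma^2 \mat I$ is symmetric positive definite, so that its inverse $\mat C = \tilde{\mat C}^{-1}$ is a legitimate covariance matrix for a Gaussian process. Since the kernel $k(\vec z) = \exp(-\|\vec z\|^2)$ is symmetric in its arguments and nonnegative, and $\sigma_0 > 0$, the weights $w_{ij} = \sigma_0 k(\vec x_i, \vec x_j)$ form a symmetric, entrywise nonnegative matrix $\mat W$. Consequently $\mat D = \mathrm{diag}(\sum_j w_{ij})$ is well defined and $\mat L = \mat D - \mat W$ is symmetric.

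The key step is the classical quadratic-form identity for the combinatorial Laplacian: for any $\vec v \in \mathbf{R}^N$,
\begin{align}
\vec v^T \mat L \vec v = \tfrac{1}{2} \sum_{i,j} w_{ij} (v_i - v_j)^2 \geq 0,
\end{align}
which follows by expanding $\sum_i d_i v_i^2 - \sum_{i,j} w_{ij} v_i v_j$ and symmetrizing. Because every $w_{ij} \geq 0$, the right-hand side is a sum of nonnegative terms, so $\mat L$ is positive semi-definite.

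Adding the regularizer, $\vec v^T \tilde{\mat C} \vec v = \vec v^T \mat L \vec v + \sigma^2 \|\vec v\|^2 \geq \sigma^2 \|\vec v\|^2$, which is strictly positive whenever $\vec v \neq \vec 0$ and $\sigma^2 > 0$. Hence $\tilde{\mat C}$ is symmetric positive definite, its eigenvalues are bounded below by $\sigma^2$, and it is invertible. Therefore $\mat C = \tilde{\mat C}^{-1}$ exists and is also symmetric positive definite, so it qualifies as a valid Gaussian process covariance matrix.

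There is no serious obstacle here; the only thing to be careful about is ensuring strict positivity rather than mere semi-definiteness, which is exactly what the hypothesis $\sigma^2 > 0$ supplies (the Laplacian itself always has the constant vector in its kernel, so some regularization is indispensable). The argument does not use any property of $k$ beyond nonnegativity and symmetry, so it applies equally to any weight scheme built from a nonnegative symmetric similarity, a fact that will be reused in the mutual $k$-NN case.
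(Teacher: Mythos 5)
Your proof is correct and follows essentially the same route as the paper: establish that $\tilde{\mat C} = \mat L + \sigma^2 \mat I$ is symmetric and positive definite, the latter because the combinatorial Laplacian is positive semi-definite and the $\sigma^2 \mat I$ shift makes it strictly positive. The only difference is that you prove the Laplacian's positive semi-definiteness directly via the quadratic-form identity $\vec v^T \mat L \vec v = \tfrac{1}{2}\sum_{i,j} w_{ij}(v_i - v_j)^2$ (valid here since the Gaussian kernel weights are nonnegative and symmetric), whereas the paper simply cites this fact; your version is more self-contained and, as you note, transfers verbatim to the mutual $k$-NN weights.
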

\begin{proof}
   (1) Since Laplacian matrix $\mat L(=\mat D - \mat W)$ is positive semidefinite \cite{merris1994laplacian}, for $\sigma^2>0$ $\tilde{\mat C}(=\mat L + \sigma^2 \mat I)$ is positive definite. So $\tilde{\mat C}$ is positive definite.  \\
   (2) Since $\tilde{\mat C}^T = (\mat D - \mat W + \sigma^2 \mat I)^T
   		= \mat D^T - \mat W^T + \sigma^2 \mat I^T = \mat D - \mat W + \sigma^2 \mat I
	        = \tilde{\mat C}$,
	$\tilde{\mat C}$ is symmetric. \\
   From (1) \& (2), by Theorem 7.5 in \cite{stefanica2014linear} $\tilde{\mat C}$  is a valid covariance matrix. QED.
\end{proof}

By applying Eq (\ref{eq:transd_lap_mean}), Bayesian kernel regression estimate for $\vec x_{\mathrm{new}}$ is defined as follows:
\begin{align}
m^{\mathrm{Bker}}_n(\vec x_{\mathrm{new}}) = 
\mu_{y_{\mathrm{new}},\mathrm{ker}},
\end{align}
where
\begin{align}
	\mu_{y_{\mathrm{new}},\mathrm{ker}} &=  \frac{\sum_{i=1}^N w_\mathrm{ker} (\vec x_{\mathrm{new}}, \vec x_i) y_i }{\sum_{i=1}^N w_{\mathrm{ker}} (\vec x_\mathrm{new}, \vec x_i)+\sigma^2} \nonumber \\
		&= \frac{\sum_{i=1}^N k (\vec x_{\mathrm{new}}, \vec x_i) y_i }{\sum_{i=1}^N  k (\vec x_\mathrm{new}, \vec x_i)+\sigma^2/\sigma_0}.
\label{eqn:Bayes_ker_reg}
\end{align}

The following theorem shows that Bayesian kernel regression introduced above asymptotically converges to the traditional kernel regression.
\begin{theorem}
  $\mu_{y_{\mathrm{new}},{\mathrm{ker}}} (=-\frac{1}{\tilde{\kappa}} \tilde{\vec k}^T \vec y)$ converges to kernel regression as $\sigma^2/\sigma_0$ approaches $0$.
\end{theorem}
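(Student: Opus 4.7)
The plan is to directly take the limit of the closed-form expression for $\mu_{y_{\mathrm{new}},\mathrm{ker}}$ derived in Eq (\ref{eqn:Bayes_ker_reg}) and compare it piecewise to the definition of $m^{\mathrm{ker}}_n(\vec x)$ in Eq (\ref{eqn:ker_reg}). Since $m^{\mathrm{ker}}_n$ is defined by cases according to whether the kernel-weight sum at $\vec x_{\mathrm{new}}$ vanishes, I would split the argument into the same two cases.

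First, in the non-degenerate case where $S := \sum_{i=1}^N k(\vec x_{\mathrm{new}}, \vec x_i) \neq 0$, the numerator of $\mu_{y_{\mathrm{new}},\mathrm{ker}}$ does not depend on $\sigma^2/\sigma_0$, and the denominator $S + \sigma^2/\sigma_0$ is continuous in $\sigma^2/\sigma_0$ and bounded away from zero as $\sigma^2/\sigma_0 \to 0^{+}$. Hence the ratio converges to $\bigl(\sum_i k(\vec x_{\mathrm{new}},\vec x_i) y_i\bigr)/S$, which is exactly $m^{\mathrm{ker}}_n(\vec x_{\mathrm{new}})$.

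Second, in the degenerate case $S=0$, I would use the fact that the kernel $k(\vec z)=\exp(-\|\vec z\|^2)$ takes values in $\mathbb{R}^{+}\cup\{0\}$ (in fact strictly positive, but the argument only needs nonnegativity), so $S=0$ forces $k(\vec x_{\mathrm{new}},\vec x_i)=0$ for every $i$. Then the numerator of $\mu_{y_{\mathrm{new}},\mathrm{ker}}$ is identically zero while the denominator equals $\sigma^2/\sigma_0 > 0$, so the ratio is $0$ for all $\sigma^2/\sigma_0 > 0$ and trivially converges to $0$, which again matches the second branch of the definition of $m^{\mathrm{ker}}_n$.

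Combining the two cases gives $\lim_{\sigma^2/\sigma_0 \to 0^{+}} \mu_{y_{\mathrm{new}},\mathrm{ker}} = m^{\mathrm{ker}}_n(\vec x_{\mathrm{new}})$. There is no real obstacle here; the only point deserving care is the degenerate case, where one must note that $\sigma^2/\sigma_0 > 0$ throughout the limit process (consistent with Theorem~\ref{thm:ker_cov_valid}, which requires $\sigma^2 > 0$ for the covariance to be valid), so the denominator never actually vanishes and the limit is genuinely approached from the strictly positive side.
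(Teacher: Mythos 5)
Your proof is correct and follows the same overall decomposition as the paper's: both split on whether $S=\sum_{i=1}^N k(\vec x_{\mathrm{new}},\vec x_i)$ vanishes, and both treat the degenerate case by noting that $S=0$ forces every kernel value to be zero so that both estimators are identically $0$. Where you differ is in the non-degenerate case: the paper runs an explicit $(\epsilon,\delta)$ argument, choosing $\epsilon< m^{\mathrm{ker}}_n(\vec x)$ and setting $\delta = S/\bigl(\tfrac{m^{\mathrm{ker}}_n(\vec x)}{\epsilon}-1\bigr)$, whereas you simply observe that the numerator of Eq (\ref{eqn:Bayes_ker_reg}) is independent of $\sigma^2/\sigma_0$ and the denominator $S+\sigma^2/\sigma_0$ is continuous and bounded away from zero, so the quotient converges to the ratio in Eq (\ref{eqn:ker_reg}). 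Your route is not only simpler but strictly more robust: the paper's choice of $\epsilon$ and $\delta$ implicitly requires $m^{\mathrm{ker}}_n(\vec x)>0$ (otherwise no positive $\epsilon< m^{\mathrm{ker}}_n(\vec x)$ exists and the $\delta$ formula can become negative or undefined), which is not guaranteed since the $y_i$ may be negative or the estimate may be zero; the continuity argument needs no such assumption. Your remark that $\sigma^2/\sigma_0>0$ throughout the limit (consistent with Theorem \ref{thm:ker_cov_valid}) is also a point the paper leaves implicit. In short, same case structure, but your limit computation is the cleaner and more general one.
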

\begin{proof}
  In case $\sum_{i=1}^N  k (\vec x_\mathrm{new}, \vec x_i) = 0$, $ k (\vec x_\mathrm{new}, \vec x_i) = 0$ for all $i$.  So it is trivial by Eq (\ref{eqn:ker_reg}) and (\ref{eqn:Bayes_ker_reg}).

  Otherwise, take a small positive $\epsilon< m^{\mathrm{ker}}_n(\vec x)$.
  Set $\delta = \{ \sum_{i=1}^N k (\vec x, \vec x_i) \} / \{ \frac{m^{\mathrm{ker}}_n(\vec x)}{\epsilon} -1 \}$.
  Then, 
    if $||\sigma^2/\sigma_0||<\delta $,
                $ || \mu_{\vec f_U,\mathrm{M}{\it k}\mathrm{NN}} - m^{\mathrm{ker}}_n(\vec x)  || < \epsilon $.      By the $(\epsilon,\delta)$ definition of the limit of a function, we get the statement in the theorem.
   QED.
\end{proof}

\subsection{Bayesian Mutual $k$-NN Regression}

Gaussian processes with Laplacian-based covariance matrix can be associated with mutual $k$-NN regression, by replacing $w_{ij}(=w(\vec x_i,\vec x_j))$ with the function 
\begin{align}
  w_{\mathrm{M}\it{k}\mathrm{NN}}(\vec x_i,\vec x_j) =  \sigma_0 \delta_{\vec x_j \sim_k \vec x_i} \cdot \delta_{\vec x_i \sim_k \vec x_j},
\end{align}
where the relation $\sim_k$ is defined as
\begin{align}
	\vec x_i \sim_k \vec x_j = \left\{ \begin{array}{ll}
 T  & \mbox{if $j \neq i$ and $\vec x_j$ is a $k$-nearest neighbor} \\
    & \mbox{of $\vec x_i$}; \\
   F   & \mbox{otherwise,} \end{array} \right.  
\end{align}

Similarly to Bayesian kernel regression, we get the following theorem related to the validity of Laplacian-based covariance matrix for Gaussian processes.
\begin{theorem}
   Covairance matrix $\tilde{\mat C}$ with $w_{ij}(=w_{\mathrm{M}\it{k}\mathrm{NN}}(\vec x_i,\vec x_j))$ is valid for Gaussian processes if $\sigma^2>0$.
\end{theorem}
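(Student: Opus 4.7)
The plan is to mirror the two-part argument used in the proof of Theorem \ref{thm:ker_cov_valid}: establish that $\tilde{\mat C} = \mat L + \sigma^2 \mat I$ is (a) symmetric and (b) positive definite, after which Theorem 7.5 in \cite{stefanica2014linear} immediately gives validity as a Gaussian process covariance matrix. The only substantive thing to check is that the weight function $w_{\mathrm{M}\it{k}\mathrm{NN}}$ yields a symmetric edge-weight matrix $\mat W$, since the positive semidefiniteness of the combinatorial Laplacian $\mat L = \mat D - \mat W$ in \cite{merris1994laplacian} is stated for symmetric nonnegative weights.

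First I would verify symmetry of $\mat W$. By definition,
\begin{align}
w_{\mathrm{M}\it{k}\mathrm{NN}}(\vec x_i, \vec x_j) = \sigma_0 \, \delta_{\vec x_j \sim_k \vec x_i} \cdot \delta_{\vec x_i \sim_k \vec x_j},
\end{align}
and swapping the roles of $i$ and $j$ only reorders the two indicator factors, whose product is commutative, so $w_{\mathrm{M}\it{k}\mathrm{NN}}(\vec x_i, \vec x_j) = w_{\mathrm{M}\it{k}\mathrm{NN}}(\vec x_j, \vec x_i)$. Thus $\mat W = \mat W^T$. Since $\mat D$ is diagonal, $\mat D = \mat D^T$, and $\sigma^2 \mat I$ is trivially symmetric, giving $\tilde{\mat C}^T = \mat D^T - \mat W^T + \sigma^2 \mat I = \tilde{\mat C}$.

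Next I would argue positive definiteness. Because $\mat W$ is symmetric with nonnegative entries (each factor is either $0$ or $1$ and $\sigma_0 > 0$), the combinatorial Laplacian $\mat L = \mat D - \mat W$ is positive semidefinite by \cite{merris1994laplacian}. Adding $\sigma^2 \mat I$ with $\sigma^2 > 0$ shifts every eigenvalue up by $\sigma^2$, making them strictly positive, so $\tilde{\mat C} = \mat L + \sigma^2 \mat I$ is positive definite. Combining with symmetry and invoking Theorem 7.5 of \cite{stefanica2014linear} completes the proof.

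I do not anticipate any real obstacle: the whole argument is a direct transcription of the proof of Theorem \ref{thm:ker_cov_valid}, with the one delicate point being the symmetry of the mutual $k$-NN weights. That symmetry is essentially built into the definition (the product of the two ``$\sim_k$'' indicators), so the proof is a one-line verification followed by the same positive-definiteness and symmetry argument as before.
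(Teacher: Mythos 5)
Your proposal is correct and follows exactly the same route as the paper, which simply states that the argument of Theorem \ref{thm:ker_cov_valid} carries over; your write-up is in fact more careful than the paper's one-line proof, since you explicitly verify the one nontrivial point, namely that the mutual $k$-NN weights are symmetric because the product of the two $\sim_k$ indicators is commutative (unlike the one-sided $k$-NN relation, which is not symmetric). No gaps.
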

\begin{proof}
 Similarly to the proof of Theorem \ref{thm:ker_cov_valid}, it can be shown that $\tilde{\mat C}$
 is positive definite and symmetric.
\end{proof}

By applying Eq (\ref{eq:transd_lap_mean}) like in Bayesian kernel regression,  Bayesian mutual $k$-NN regression estimate
for a given data $\vec x_{\mathrm{new}}$ is defined as follows.
\begin{align}
   m^{\mbox{BM}k\mbox{NNR}}_n(\vec x_{\mathrm{new}}) =    	\mu_{f_{\mathrm{new}},\mathrm{M}\it{k}\mathrm{NN}},
\end{align}
where
\begin{align}
   \mu_{f_{\mathrm{new}},\mathrm{M}\it{k}\mathrm{NN}}
=&\frac{\sum_{i=1}^N w_{\mathrm{M}\it{k}\mathrm{NN}} (\vec x_{\mathrm{new}}, \vec x_i) y_i }
{\sum_{i=1}^N w_{\mathrm{M}\it{k}\mathrm{NN}} (\vec x_{\mathrm{new}}, \vec x_i)+\sigma^2} \nonumber \\
 =&\frac{\sum_{i=1}^N  \delta_{\vec x_j \sim_k \vec x_i} \cdot \delta_{\vec x_i \sim_k \vec x_j} y_i }
{\sum_{i=1}^N \delta_{\vec x_j \sim_k \vec x_i} \cdot \delta_{\vec x_i \sim_k \vec x_j} +\sigma^2/\sigma_0 }.
\label{eqn:Bayes_mut_knn_reg}
\end{align}

The following theorem shows that Bayesian mutual $k$-nearest neighbor regression introduced above asymptotically converges to the traditional mutual $k$-NN regression.
\begin{theorem}
  $\mu_{f_{\mathrm{new}},{\mathrm{M}{\it k}\mathrm{NN}}} (=-\frac{1}{\tilde{\kappa}} \tilde{\vec k}^T \vec y)$ converges to mutual $k$-$\mathrm{NN}$ regression as $\sigma^2/\sigma_0$ approaches $0$.
\end{theorem}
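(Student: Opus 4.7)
My plan is to follow the same $(\epsilon,\delta)$ strategy that the paper used for the Bayesian kernel regression convergence theorem just above, exploiting the fact that the weight $w_{\mathrm{M}{\it k}\mathrm{NN}}(\vec x_{\mathrm{new}},\vec x_i) = \sigma_0 \delta_{\vec x_i \sim_k \vec x_{\mathrm{new}}} \cdot \delta_{\vec x_{\mathrm{new}} \sim_k \vec x_i}$ is just $\sigma_0$ when $\vec x_i \in \mathcal{M}_k(\vec x_{\mathrm{new}})$ and zero otherwise. Substituting this equivalence into Eq.~(\ref{eqn:Bayes_mut_knn_reg}) collapses both sums, giving $\mu_{f_{\mathrm{new}},\mathrm{M}{\it k}\mathrm{NN}} = \left(\sum_{i:\vec x_i \in \mathcal{M}_k(\vec x_{\mathrm{new}})} y_i\right) / \left(M_k(\vec x_{\mathrm{new}}) + \sigma^2/\sigma_0\right)$, which is exactly the right-hand side of Eq.~(\ref{eqn:mut_nn_reg_est}) but with the denominator regularized by the single positive term $\sigma^2/\sigma_0$.

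Next I would split into the two cases appearing in the piecewise definition of $m^{\mbox{M$k$NNR}}_n$. If $M_k(\vec x_{\mathrm{new}}) = 0$, the set $\mathcal{M}_k(\vec x_{\mathrm{new}})$ is empty, the numerator vanishes identically, and $\mu_{f_{\mathrm{new}},\mathrm{M}{\it k}\mathrm{NN}} = 0$ for every $\sigma^2/\sigma_0 > 0$; this matches $m^{\mbox{M$k$NNR}}_n(\vec x_{\mathrm{new}}) = 0$ trivially and requires no limit argument. Otherwise, for $M_k(\vec x_{\mathrm{new}}) > 0$, the mapping $t \mapsto \left(\sum_{i:\vec x_i \in \mathcal{M}_k(\vec x_{\mathrm{new}})} y_i\right) / \left(M_k(\vec x_{\mathrm{new}}) + t\right)$ is a rational function of $t$ with denominator bounded away from zero on a neighborhood of $t = 0$, so continuity at $t = 0$ yields the limit by direct substitution, producing $m^{\mbox{M$k$NNR}}_n(\vec x_{\mathrm{new}})$.

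To mirror the explicit $(\epsilon,\delta)$ formulation the paper uses in the preceding kernel theorem, I would, for any $0 < \epsilon < |m^{\mbox{M$k$NNR}}_n(\vec x_{\mathrm{new}})|$, take $\delta = \epsilon \, M_k(\vec x_{\mathrm{new}}) / (|m^{\mbox{M$k$NNR}}_n(\vec x_{\mathrm{new}})| - \epsilon)$ and verify algebraically that $|\sigma^2/\sigma_0| < \delta$ forces $|\mu_{f_{\mathrm{new}},\mathrm{M}{\it k}\mathrm{NN}} - m^{\mbox{M$k$NNR}}_n(\vec x_{\mathrm{new}})| < \epsilon$; the degenerate subcase $m^{\mbox{M$k$NNR}}_n(\vec x_{\mathrm{new}}) = 0$ (which makes this $\delta$ formula vacuous) is handled separately by noting that the numerator is zero, so the difference is identically zero for all $\sigma^2/\sigma_0 > 0$.

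Honestly, I do not expect a substantive obstacle: the indicator nature of $w_{\mathrm{M}{\it k}\mathrm{NN}}$ eliminates the analytic complications one might fear, and the only real care-point is carving off the $M_k(\vec x_{\mathrm{new}}) = 0$ case before invoking continuity, so that one never confronts an indeterminate $0/0$. Once that bookkeeping is done, the rest is a one-line continuity argument (or equivalently the explicit $(\epsilon,\delta)$ bound above) mimicking the kernel theorem's proof verbatim.
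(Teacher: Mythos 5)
Your proposal is correct and follows essentially the same $(\epsilon,\delta)$ argument as the paper: the same case split on whether the mutual-$k$-NN set is empty, and your $\delta = \epsilon M_k(\vec x_{\mathrm{new}})/(|m^{\mbox{M$k$NNR}}_n(\vec x_{\mathrm{new}})|-\epsilon)$ is algebraically identical to the paper's $\{\sum_{i}\delta_{\vec x_j \sim_k \vec x_i}\cdot\delta_{\vec x_i \sim_k \vec x_j}\}/\{m^{\mbox{M$k$NNR}}_n(\vec x)/\epsilon - 1\}$. Your only additions are minor but welcome refinements: writing the estimator explicitly over $\mathcal{M}_k(\vec x_{\mathrm{new}})$, and handling the degenerate subcases of a zero or negative estimate via absolute values, which the paper's requirement $\epsilon < m^{\mbox{M$k$NNR}}_n(\vec x)$ silently assumes away.
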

\begin{proof}
  In case $\sum_{i=1}^N \delta_{\vec x_j \sim_k \vec x_i} \cdot \delta_{\vec x_i \sim_k \vec x_j} = 0$,  $\delta_{\vec x_j \sim_k \vec x_i} \cdot \delta_{\vec x_i \sim_k \vec x_j} = 0$ for all $i$. So it is trivial by Eq (\ref{eqn:mut_nn_reg_est}) and (\ref{eqn:Bayes_mut_knn_reg}). 

  Otherwise, take a small positive $\epsilon< m^{\mbox{M}k\mbox{NNR}}_n(\vec x)$. \\
  Set $\delta = \{ \sum_{i=1}^N  \delta_{\vec x_j \sim_k \vec x_i} \cdot \delta_{\vec x_i \sim_k \vec x_j}  \} / \{ \frac{m^{\mbox{M}k\mbox{NNR}}_n(\vec x)}{\epsilon} -1 \}$.
  Then, 
    if $||\sigma^2/\sigma_0||<\delta $,
                $ || \mu_{\vec f_U,\mathrm{M}{\it k}\mathrm{NN}} - m^{\mbox{M}k\mbox{NNR}}_n(\vec x)  || < \epsilon $.      By the $(\epsilon,\delta)$ definition of the limit of a function, we get the statement in the theorem.
   QED.
\end{proof}

\subsection{Hyperparameter Selection}

There is a set of hyperparameters that should be selected in both the proposed methods. For Bayesian kernel regression,  the set of hyperparameters is $\Theta=\{ h_n, \sigma_0, \sigma  \}$ for the single bandwidth scheme, or $\Theta=\{ h_{n,1}, h_{n,2}, \ldots, h_{n,d}, \sigma_0, \sigma  \}$ for the multiple bandwidth scheme.  For Bayesian $k$-NN regression, the set of hyperparameters is $\Theta=\{ k, \sigma_0, \sigma  \}$, where $k$ is a interger greater than $0$.  These sets of hyperparameters can be selected through the Bayesian evidence framework by maximizing the log of the marginal likelihood \cite{Gibbs97efficientimplementation} as follows.
\begin{align}
    \Theta^*
    	=& \mbox{argmax}_{\Theta} \mathcal{L}(\Theta),
\end{align}
where 
\begin{align}
\mathcal{L}(\Theta) =& \log p(\vec y|\Theta) \\
            =& \log \{ |2\pi \mat C|^{-\frac{1}{2}} \exp (-\frac{1}{2}\vec y^T \mat C^{-1} \vec
y) \} \\
            =& \frac{1}{2} \log |\tilde{\mat C}| - \frac{1}{2} \vec y^T \tilde{\mat C} \vec
y - \frac{N}{2} \log 2\pi, \label{eqn:bkr_log_ev}
\end{align}
where $\tilde{\mat C}=\mat L + \sigma^2 \mat I$.

The discrete hyperparameters (e.g., $k$) can be
selected based on the value of $\mathcal{L}$ as 
\begin{align}
      K^*	&= \mbox{argmax}_k  \mathcal{L}(\{k,\sigma,\sigma_0\}). 
\end{align}
For the continuous hyperparameters (e.g., $\sigma$, $\sigma_0$, $h_n$),
 to optimize $\mathcal{L}$ with respect to $\Theta$
we can use the derivative 
\begin{align}
\frac{\partial \mathcal{L}}{\partial \theta} = \frac{1}{2}
\mathrm{trace}(\tilde{\mat C}^{-1} \frac{\partial \tilde{\mat C}}{\partial
\theta}) - \frac{1}{2}\vec y^T \frac{\partial \tilde{\mat C}}{\partial \theta} \vec y,
\end{align}
where $\tilde{\mat C}=\mat L + \sigma^2 \mat I$.

On the other hand, the posterior distributions of the
hyperparameters given the data can be inferred by the Bayesian method via
Markov Chain Monte Carlo methods
similarly to \cite{Neal1997RC_GP,Williams95gaussian}. And the regression estimate can be averaged
over the hyperparameters rather than obtained by one fixed set of hyperparameters. This would produce better results but cost more computational power. This approach has not been taken in this paper

\section{Simulation Results}
\label{sec:sim_res} 

We did the simulations for the proposed methods to observe how the methods work and show their usefulness. The simulations for both methods were performed for both an artificial data set and a real world data set. 

\subsection{Simulation Results for Bayesian Kernel Regression}

First, we did the simulations for Bayesian kernel regression. To generate an artificial data set, we used the equation $\mathrm{sinc}(x) = \frac{\sin(\pi x)}{\pi x}$ for the sinc function. We took the points equally spaced with the interval 0.2  between -5 and 5. We made up the training set with those points as inputs and with the sinc function values at those points as target values.  And we took the points equally spaced with the interval 0.1 between -5.01 and 5. We made up the test set with those points as inputs and with the sinc function values at those points as target values. We call this training and test data set as the sinc data set I.  We made up another sinc data set (called the sinc data set II), which has the points equally spaced with the interval 0.5 (rather than 0.2)  between -5 and 5 as input points of the training set and the same test set as in the sinc data set I.

For the sinc data set I and II, we applied the proposed Bayesian kernel regression. For the comparison we also applied the traditional kernel regression with the bandwidth selected by leave-one-out cross-validation\footnote{Some functions in Econometrics package in Octave was used for kernel regression and bandwidth selection by leave-one-out cross-validation} \cite{hardle1985optimal}.  (We call this bandwidth as CV bandwidth.)  In addition we applied kernel regression with the bandwidth chosen in the proposed Bayesian kernel regression. (We call this bandwidth as B bandwidth.) For both data sets we tried the simulations repeatedly with different initial values for $\sigma_0,\sigma$, and found that one of the lowest marginal likelihoods is reached with the initial value 100, 1. 

Figure \ref{fig.1} shows the results of the three methods with the training data points for the sinc data set I. Apparently the proposed Bayesian kernel regression methods perform best.  In Figure \ref{fig.2} we can see how the hyperparameter can be selected in Bayesian kernel regression for the sinc data set I. They were plotted in two levels of scales to show that the selection of a specific value as well as averaging is reasonable for the hyperparameter because the evidence is highly peaked near the optimum as seen in Figure 2(b).

\begin{figure}
\centering
\includegraphics[width=8cm]{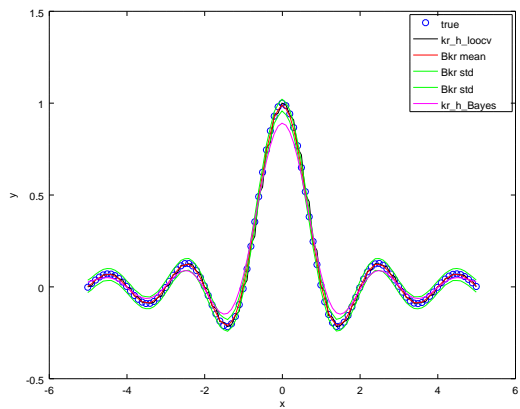}
\caption{The results of kernel regression with CV bandwidth, Bayesian kernel regression, and kernel regression with B bandwidth for the sinc data set I}
\label{fig.1}
\end{figure}

\begin{figure}
\centering
\begin{tabular}{c}
\includegraphics[width=8cm]{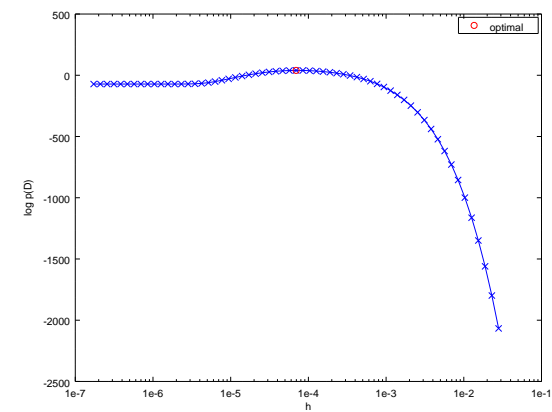} \\  
(a) \\
\includegraphics[width=8cm]{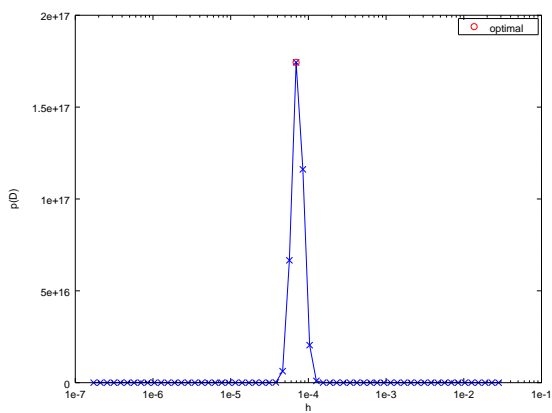} \\
(b)
\end{tabular}
\caption{ The use of evidence for hyperparameter selection in Bayesian kernel regression for the sinc data set I. Points x mean log evidence or evidence at some points. Circle points represent the optimum: (a) bandwidth vs. log evidence (b) bandwidth vs. evidence}
\label{fig.2}
\end{figure}

Table \ref{table:Bkr_sinc} shows the performance of the methods applied to the sinc data set I and II. The performance was measured in terms of mean squared error between true target values and regression estimates. Bayesian kernel regression was better than kernel regression with CV bandwidth or B bandwidth. 

\begin{table}[ht]
\caption{ Mean squared errors in kernel regression with CV bandwidth, Bayesian kernel regression, and kernel regression with B bandwidth for the sinc data set I and II}
\centering 
\begin{tabular}{c c c} 
\hline\hline 
 Methods & sinc data set I & sinc data set II   \\ [0.5ex] 
\hline 
Kernel regression with CV bandwidth& 9.1840e-04 &  0.0060448 \\ 
Bayesian kernel regression  & {\bf 3.5371e-05} & {\bf 0.0012617} \\
Kernel regression with B bandwidth & 0.0018597 & 0.018771 \\  [1ex] 
\hline 
\end{tabular}
\label{table:Bkr_sinc}
\end{table}

Next, we applied the three methods including the proposed method to a real world data set. We used  the yacht hydrodynamics data set \cite{Lichman:2013} for the evaluation. The data set is related to prediction of residuary resistance of sailing yachts at the initial design stage, which are very important for evaluating the performance of the ship and for estimating the required propulsive power.  It has 308 data points, each of which contains 6 inputs related to hull geometry coefficients and the Froude number, and a target value of residuary resistance per unit weight of displacement.  Input data were normalized before they were used for the evaluation of all three methods. 

  Since the data set is 6-dimensional, we have two choices for bandwidth configuration. A single identical bandwidth can be used for all the dimensions (i.e. single bandwidth), or an individual bandwidth can be used for each dimension (i.e. multiple bandwidths).  Bandwidth selections by both single and multiple bandwidth schemes were tried. The leave-one-out cross-validation for multiple bandwidths in the traditional kernel regression\footnote{ Some functions in Econometrics package in Octave was modified and used for the multiple bandwidth scheme in the traditional kernel regression.} is expensive,  but it was tried for comparison.

Table \ref{table:Bkr_Yacht} shows the performance of the three methods with the single bandwidth scheme. The 10 fold cross-validation was done for the performance evaluation. In the table means and standard deviations for the 10 fold cross-validation are shown. To speed up the optimisation and avoid unreasonable local minima we use the heuristic scheme to fix $\sigma_0$ to 1 and $\sigma$ to 0.0000001 and to update the bandwidth only.   Bayesian kernel regression and kernel regression with B bandwidth performed better than kernel regression with CV bandwidth. 

\begin{table}[ht]
\caption{ Means and standard deviations of mean squared errors in kernel regression with CV bandwidth, Bayesian kernel regression, and kernel regression with B bandwidth,
in the single bandwidth scheme for the yacht hydrodynamics data set}
\centering 
\begin{tabular}{c c } 
\hline\hline 
 Methods & MSE ($\mu \pm \sigma$) \\ [0.5ex] 
\hline 
Kernel regressison with CV bandwidth & 33.540 $\pm$ 23.507   \\ 
Bayesian kernel regression  & {\bf 33.529 $\pm$ 23.504} \\
Kernel regression with B bandwidth  & {\bf 33.529 $\pm$ 23.504}  \\  [1ex] 
\hline 
\end{tabular}
\label{table:Bkr_Yacht}
\end{table}

Table \ref{table:Bkr_m_Yacht} shows the performance of the three methods with the multiple bandwidth scheme. As in the single bandwidth scheme, the 10 fold cross-validation was done for the performance evaluation. In the table means and standard deviations for the cross-validation are shown.  We fix $\sigma_0$, $\sigma$ to the same values as in the single bandwidth scheme and update the bandwidths only. Comparing with the results in Table \ref{table:Bkr_Yacht} it is clear that the multiple bandwidth scheme is far better than the single bandwidth scheme.   Bayesian kernel regression and kernel regression with B bandwidth performed better than kernel regression with CV bandwidth. 

\begin{table}[ht]
\caption{ Means and standard deviations of mean squared errors in kernel regression with CV bandwidth, Bayesian kernel regression, and kernel regression with B bandwidth 
in the multiple bandwidth scheme for the yacht hydrodynamics data set}
\centering 
\begin{tabular}{c c } 
\hline\hline 
 Methods & MSE ($\mu \pm \sigma$)  \\ [0.5ex] 
\hline 
Kernel regressison with CV bandwidth & 1.3266   $\pm$ 1.1723  \\    
Bayesian kernel regression  & {\bf 1.1436 $\pm$  0.7894}  \\
Kernel regression with B bandwidth & {\bf 1.1436 $\pm$ 0.7894}  \\  [1ex] 
\hline 
\end{tabular}
\label{table:Bkr_m_Yacht}
\end{table}

\subsection{Simulation Results for Bayesian Mutual $k$-NN Regression}

Second, we did the simulations for Bayesian mutual $k$-NN regression.  We used the sinc data set I and II like in the simulation for Bayesian kernel regression.  For both the data sets, we applied the proposed Bayesian mutual $k$-NN regression. For the comparison we also applied the traditional $k$-NN regression and mutual $k$-NN regression where both $k$'s were selected by leave-one-out cross-validation.  (We call this $k$ as CV $k$.)  In addition we applied mutual $k$-NN regression with $k$ chosen in the proposed Bayesian mutual $k$-NN regression. (We call this $k$ as B $k$.) For both data sets we tried the simulation repeatedly with different initial values for $\sigma_0,\sigma$, and found that one of the lowest marginal likelihoods is reached with the initial value 300, 3. 

Figure \ref{fig.3} shows the results of Bayesian mutual $k$-NN regression and mutual $k$-NN regression with CV $k$ with the training data points for the sinc data set I. 
Figure \ref{fig.4} shows how the leave-one-out cross-validation method works for mutual $k$-NN regression for the sinc data set I. 
In Figure \ref{fig.5} we can see how the hyperparameter $k$ can be selected in Bayesian mutual $k$-NN regression for the sinc data set I. Marginal likelihoods were plotted in two kinds of scales to show that the selection of a specific value as well as averaging is reasonable for the hyperparameter $k$ because the evidence is highly peaked near the optimum as seen in Figure 5(b). 

\begin{figure}
\centering
\includegraphics[width=8cm]{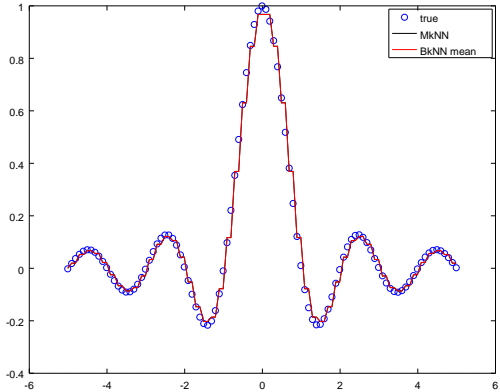}
\caption{The results of mutual $k$-NN with CV $k$, Bayesian mutual $k$-NN regression, mutual $k$-NN with B $k$ for sinc data set I}
\label{fig.3}
\end{figure}

\begin{figure}
\centering
\includegraphics[width=8cm]{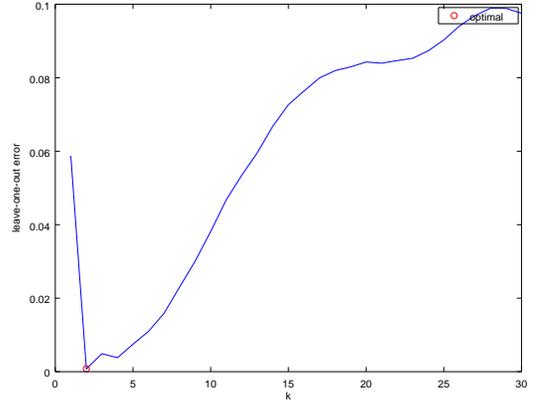}
\caption{The leave-one-out cros-validation to select $k$ for mutual $k$-NN regression for sinc data set I}
\label{fig.4}
\end{figure}

\begin{figure}
\centering
\begin{tabular}{c}
\includegraphics[width=8cm]{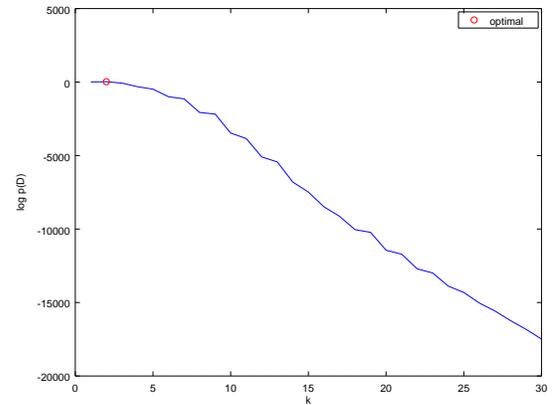} \\  
(a) \\
\includegraphics[width=8cm]{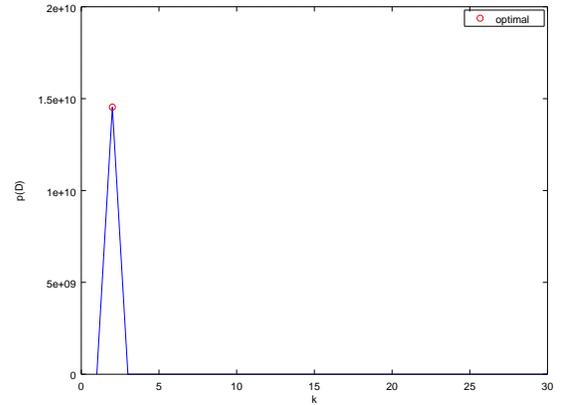} \\
(b)
\end{tabular}
\caption{The use of evidence for the selection of $k$ in Bayesian mutual $k$-NN regression for the sinc data set I. Points x mean log evidence or evidence at some $k$. Circle points represent the optimum: (a) $k$ vs. log evidence, (b) $k$ vs. evidence   }
\label{fig.5}
\end{figure}

Table \ref{table:BMkNN_sinc} shows the performance of the methods applied to the sinc data set I and II. The performance was measured in terms of mean squared error between true target values and regression estimates. For the sinc data set I, Bayesian $k$-NN regression was a little bit worse than mutual $k$-NN regression with CV $k$, and mutual $k$-NN regression with B $k$ was as good as mutual $k$-NN regression with CV $k$. For the sinc data set II, Bayesian $k$-NN regression was the best.

\begin{table}[ht]
\caption{ Mean squared errors in $k$-NN with CV $k$, mutual $k$-NN with CV $k$, Bayesian mutual $k$-NN regression, mutual $k$-NN with B $k$ for the sinc data set I and the sinc data set II}
\centering 
\begin{tabular}{c c c} 
\hline\hline 
 Methods & sinc data set I & sinc data set II  \\ [0.5ex] 
\hline 
$k$-NN regression  with CV $k$ & 0.0012619 & 0.0069711 \\
Mutual $k$-NN regression  with CV $k$ & {\bf 0.0012573} & 0.0069573 \\ 
Bayesian $k$-NN regression  & 0.0012587 & {\bf 0.0059893} \\ \ 
$k$-NN regression with B $k$  & {\bf 0.0012573} & 0.0060497  \\[1ex]  
\hline 
\end{tabular}
\label{table:BMkNN_sinc}
\end{table}

Next, we applied the four methods including the proposed method to a real world data set. Similarly to Bayesian kernel regression, we used the yacht hydrodynamics data set for the evaluation. Table \ref{table:BMkNN_Yacht} shows the performance of the three methods. The 10 fold cross-validation was done for the performance evaluation. In the table means and standard deviations for the cross-validation are shown.  The hyperparamters $\sigma_0,\sigma$ was fixed to 0.1, 0.00001, and only $k$ was selected.  All the three methods showed the same performance with the same $k(=2)$ selected.

\begin{table}[ht]
\caption{ Means and standard deviations of mean squared errors in $k$-NN regression with CV $k$,mutual $k$-NN regression with CV $k$, Bayesian mutual $k$-NN regression, and mutual $k$-NN regression with B $k$ for the yacht hydrodynamics data set}
\centering 
\begin{tabular}{c c } 
\hline\hline 
 Methods & MSE avg. \\ [0.5ex] 
\hline 
$k$-NN regression with CV $k$ & {\bf 39.127 $\pm$ 23.004 } \\ 
Mutual $k$-NN regression with CV $k$ & {\bf 39.127 $\pm$ 23.004 } \\  
Bayesian mutual $k$-NN regression  & {\bf 39.127 $\pm$ 23.004 } \\  
Mutual $k$-NN  regression with B $k$  & {\bf 39.127 $\pm$ 23.004 } \\[1ex]  
\hline 
\end{tabular}
\label{table:BMkNN_Yacht}
\end{table}

\section{Related Works}
\label{sec:related_works}

We discuss the former works related to the proposed methods in this paper. \cite{kim2013transductive} has proposed a transductive regression method with Gaussian processes  and applied it to object pose estimation. They has defined Laplacian kernel similarly to Laplaican-based covariance matrix in this paper and proposed a Bayesian method to select the hyperparameters.  

\cite{verbeek2006gaussian} used Gaussian field and Laplacians with a kernel related to $k$-NN and locally linear embedding. They applied their methods to facial pose estimation and object correspondence learning. They proposed active learning method based on entropy minimization, and a model selection scheme based on maximum likelihood. 

\cite{zhu2003semi} extended Gaussian field to Gaussian processes for semi-supervised classification.
They used graph Laplacians with various kinds of covariance functions including squared exponential and mutual $k$-NN type functions. The proposed a Bayesian method how to select the hyperparameters. They applied their methods to various binary classification problems.

None of the above works has shown the relationship between the Laplacian-based method and the traditional kernel and mutual $k$-NN regression.  In this paper we have shown that the means of predictive target values in our proposed methods asymptotically converge to the kernel and mutual $k$-NN regression estimate. We have also proposed Bayesian model selection methods for key hyperparameters in both estimates.  In \cite{kim2013transductive} and \cite{verbeek2006gaussian} symmetry as a covariance matrix related to $k$-NN was not checked, even though \cite{verbeek2006gaussian} used only symmetric matrixes for the simulation.  We have presented the theorems to show the validity of the covariance matrixes used in the proposed methods. 

\section{Conclusion}

We have proposed Bayesian kernel and mutual $k$-NN regression methods. Those two regression methods work in the framework of Gaussian process regression. Comparing to the traditional kernel regression and mutual $k$-NN regression, it has advantages to provide not only a distribution for the target value but also the principled way to select hyperparameters.  Even though the leave-one-out cross-validation can be done for the traditional methods, the performances of the proposed methods were better or comparble in terms of estimation accuracy and computational complexity. Especially, the multiple bandwidth scheme can be applied much more efficiently in Bayesian kernel regression than in traditional kernel regression. The simulation results for the artificial and real world data set show the superiority and efficacy of the proposed methods comparing with the traditional methods.

It is valuable to compare Gaussian process regression (GPR) with squared exponential covariance function (Eq (\ref{eqn:cov_ftn})) to Bayesian kernel regression (BKR), i.e. GPR with Laplacian-based covariance matrix. GPR has an advantage in terms of accuracy, and BKR has an advantage in terms  of computational complexity.  When we compare the formulations of means of target values in two models, in the one in BKR (Eq (\ref{eqn:result}))  only a limited correlation among data points is expressed, but in the one in GPR (Eq (\ref{eq:transd_lap_mean})) much more complicated correlation among data points is expressed. So regression performance in GPR should be better than the one in BKR.  However, unllike in GPR, BKR has advantages that it does not require the computation of matrix inversion, which is the computational bottleneck of GPR, for the mean and variance of target value (See Eq (\ref{eqn:result}) vs. Eq (\ref{eq:transd_lap_mean}), (\ref{eq:transd_lap_std})), and that it requires the computation of matrix determinant but not matrix inversion for the log evidence (See Eq (\ref{eqn:loglikeli}) vs. Eq (\ref{eqn:bkr_log_ev})). (It requires the computation of matrix inversion for the derivative of the log evidence.) 

There are some possible future works on the proposed methods. For Bayesian kernel regression, we have had a new bandwidth selection scheme. This scheme can be applied to many application areas including financial engineering and image processing. \cite{ait1998nonparametric,ait2000nonparametric}  used kernel regression with a heuristic bandwidth selection method in financial engineering. It should be interesting to use Bayesian kernel regression to their tasks, and to compare two results. Data-adapted extensions  of kernel regression such as bilateral filtering in image processing can be extended to Bayesian models like in the proposed Bayesian kernel regression so that it may have bandwidth or hyperparameter selection scheme. As future works for Bayesian mutual $k$-NN regression, it may be possible to build Bayesian mutual $k$-NN classifiers, or related models.

\newpage

\bibliographystyle{IEEEtran}
\bibliography{paper}

%








\end{document}